\def\gou{\ding{51}}
\def\cha{\ding{55}}
\def\tabref#1{Tab.~\ref{#1}}
\def\figref#1{Fig.~\ref{#1}}
\def\Figref#1{Fig.~\ref{#1}}
\def\appref#1{Appendix~\ref{#1}}
\def\secref#1{Sec.~\ref{#1}}
\def\eqref#1{(\ref{#1})}
\def\algref#1{Algorithm~\ref{#1}}
\def\Algref#1{Algorithm~\ref{#1}}
\def\1{\bm{1}}
\DeclareMathAlphabet{\mathsfit}{\encodingdefault}{\sfdefault}{m}{sl}
\SetMathAlphabet{\mathsfit}{bold}{\encodingdefault}{\sfdefault}{bx}{n}
\DeclareMathOperator*{\argmin}{arg\,min}
\definecolor{blgrey}{rgb}{0.6,0.6,0.6}
\definecolor{bblue}{rgb}{0.855,0.933,0.98}
\definecolor{dblue}{HTML}{5297D6}
\definecolor{gainred}{rgb}{0.1,0.5,0.3}
\definecolor{citecolor}{HTML}{0071BC}
\definecolor{linkcolor}{HTML}{ED1C24}
\newcommand{\graycell}[1]{\textcolor{gray!60}{#1}}
\newcommand{\ablanum}[1]{\textcolor{dblue}{#1}}         
\newcommand{\ablaref}[1]{row \textcolor{dblue}{#1}}     
\definecolor{dkcyan}{cmyk}{1,0,0,.25}
\definecolor{dkgreen}{rgb}{0,0.6,0}
\definecolor{gray}{rgb}{0.5,0.5,0.5}
\definecolor{mauve}{rgb}{0.58,0,0.82}
\tiny\color{gray},
\def\smallcaption{\small}
\newcommand\firstpara[1]{\noindent\textbf{#1}\,}
\newcommand\para[1]{\noindent\textbf{#1}\,}
\title{Visual Autoregressive Modeling:\, Scalable Image Generation via Next-Scale Prediction}
\author{
  \vspace{-25pt}\\
  \textbf{Keyu Tian$^{1,2}$,\quad Yi Jiang$^{2,\dag}$,\quad Zehuan Yuan$^{2,*}$,\quad Bingyue Peng$^2$,\quad Liwei Wang$^{1,}$\thanks{Corresponding authors: \,\href{mailto:wanglw@pku.edu.cn}{\color{black}{wanglw@pku.edu.cn}}, \href{mailto:yuanzehuan@bytedance.com}{\color{black}{yuanzehuan@bytedance.com}};\, $\dag$: project lead }}\vspace{3pt} \\
  $^1$Peking University ~~\quad\quad $^2$Bytedance Inc\vspace{3pt} \\
  \texttt{\small keyutian@stu.pku.edu.cn, jiangyi.enjoy@bytedance.com,} \\ 
  \texttt{\small yuanzehuan@bytedance.com, bingyue.peng@bytedance.com, wanglw@pku.edu.cn}\vspace{8pt}  \\
  Try and explore our online demo at:~\, \url{https://var.vision}\vspace{5pt} \\
  Codes and models:~\, \url{https://github.com/FoundationVision/VAR}
  \vspace{-4pt} \\
}
\newtheorem{theorem}{Theorem}[section]
\newtheorem{lemma}[theorem]{Lemma}
\begin{document}

\maketitle


\begin{figure}[ht]
\vspace{-16pt}
\begin{center}
	\includegraphics[width=0.97\linewidth]{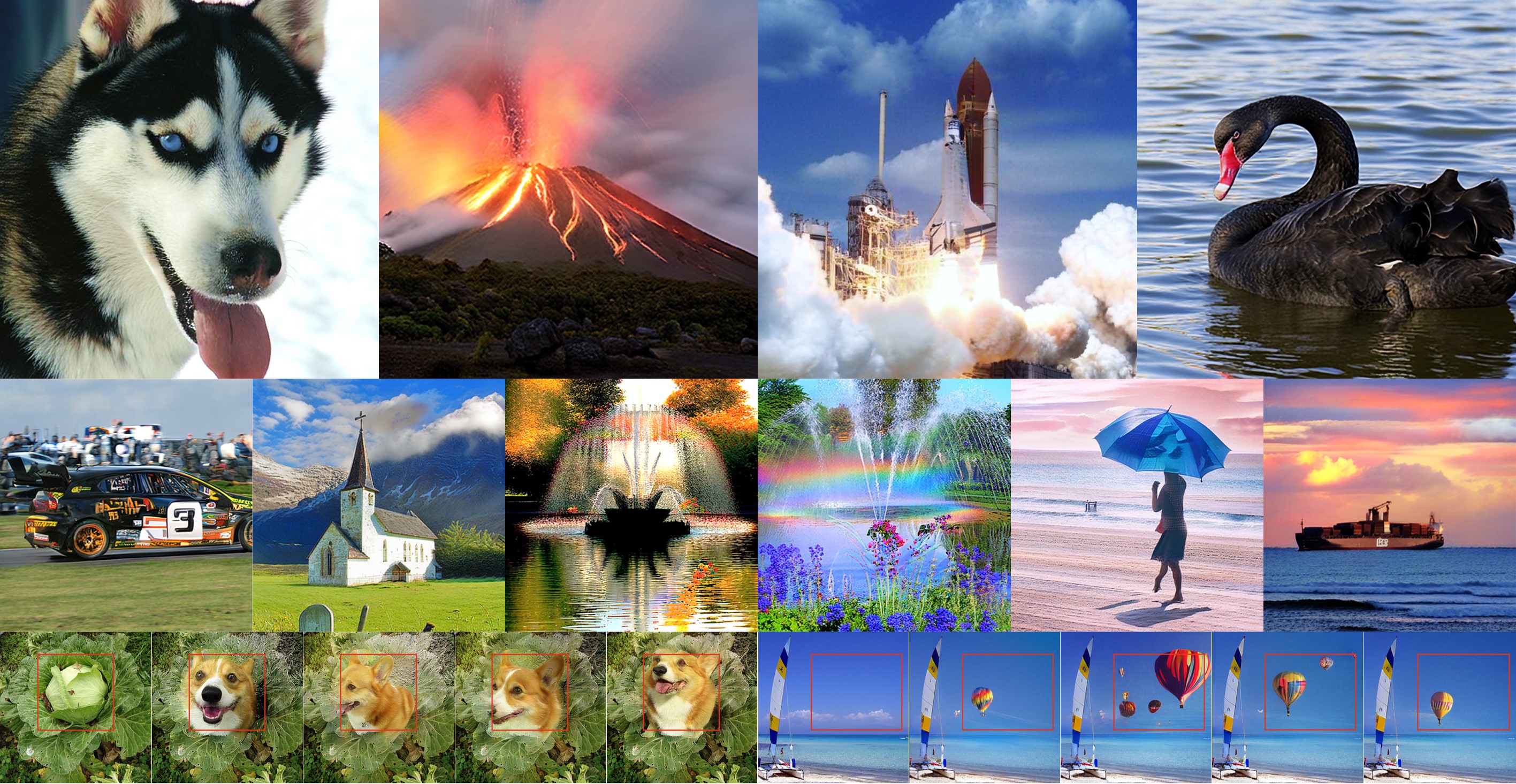}
\end{center}
\vspace{-7pt}
\caption{\small
\textbf{Generated samples from Visual AutoRegressive (VAR) transformers trained on ImageNet}.
We show 512$\times$512 samples (top), 256$\times$256 samples (middle), and zero-shot image editing results (bottom).
}
\vspace{-2pt}
\label{fig:abs}
\end{figure}

\begin{abstract}
\vspace{-4pt}


We present Visual AutoRegressive modeling (VAR), a new generation paradigm that redefines the autoregressive learning on images as coarse-to-fine ``next-scale prediction'' or ``next-resolution prediction'', diverging from the standard raster-scan ``next-token prediction''.
This simple, intuitive methodology allows autoregressive (AR) transformers to learn visual distributions fast and can generalize well: VAR, for the \textit{first time}, makes GPT-style AR models surpass diffusion transformers in image generation.
On ImageNet 256$\times$256 benchmark, VAR significantly improve AR baseline by improving Fréchet inception distance (FID) from 18.65 to 1.73, inception score (IS) from 80.4 to 350.2, with 20$\times$ faster inference speed.
It is also empirically verified that VAR outperforms the Diffusion Transformer (DiT) in multiple dimensions including image quality, inference speed, data efficiency, and scalability.
Scaling up VAR models exhibits clear power-law scaling laws similar to those observed in LLMs, with linear correlation coefficients near $-0.998$ as solid evidence.
VAR further showcases zero-shot generalization ability in downstream tasks including image in-painting, out-painting, and editing.
These results suggest VAR has initially emulated the two important properties of LLMs: \textbf{Scaling Laws} and \textbf{zero-shot} generalization.
We have released all models and codes to promote the exploration of AR/VAR models for visual generation and unified learning.

\end{abstract}

\begin{figure}[ht]
\begin{center}
\includegraphics[width=\linewidth]{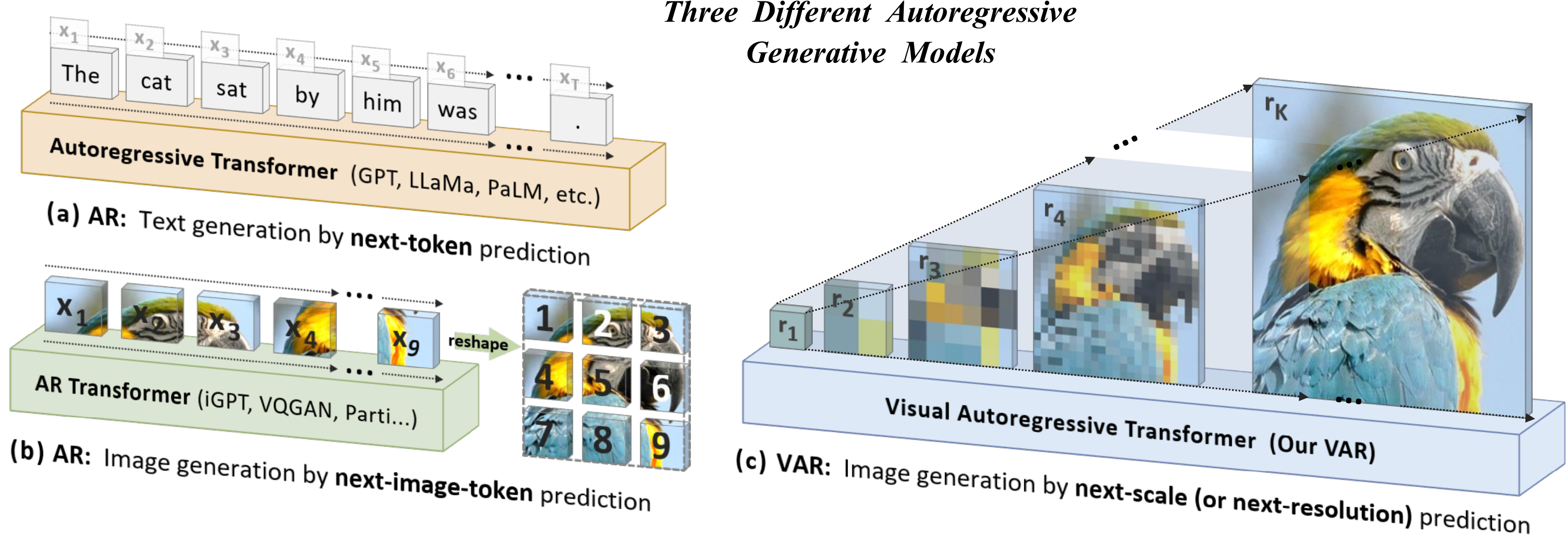}
\end{center}
\vspace{-9pt}
\caption{\small
\textbf{Standard autoregressive modeling (AR) \textit{vs.} our proposed visual autoregressive modeling (VAR).}
(a) AR applied to language: sequential text token generation from left to right, word by word;
(b) AR applied to images: sequential visual token generation in a raster-scan order, from left to right, top to bottom;
(c) VAR for images: multi-scale token maps are autoregressively generated from coarse to fine scales (lower to higher resolutions), with parallel token generation within each scale. VAR requires a multi-scale VQVAE to work.
\vspace{-12pt}
}
\label{fig:intro}
\end{figure}

\vspace{-2pt}
\section{Introduction} \label{sec:intro}
\vspace{-4pt}
The advent of GPT series~\cite{gpt1,gpt2,gpt3,gpt3.5,gpt4} and more autoregressive (AR) large language models (LLMs)~\cite{palm,palm2,chinchilla,llama1,llama2,bloom,ernie3,qwen,team2023gemini} has heralded a new epoch in the field of artificial intelligence.
These models exhibit promising intelligence in generality and versatility that, despite issues like hallucinations~\cite{hallucination}, are still considered to take a solid step toward the general artificial intelligence (AGI).
At the core of these models is a self-supervised learning strategy -- \textit{predicting the next token} in a sequence, a simple yet profound approach.
Studies into \textbf{the success of these large AR models} have highlighted their \textbf{scalability and generalizabilty}:
the former, as exemplified by \textit{scaling laws}~\cite{scalinglaw,scalingar}, allows us to predict large model's performance from smaller ones and thus guides better resource allocation, while the latter, as evidenced by \textit{zero-shot and few-shot} learning~\cite{gpt2,gpt3}, underscores the unsupervised-trained models' adaptability to diverse, unseen tasks. These properties reveal AR models' potential in learning from vast unlabeled data, encapsulating the  essence of ``AGI''.

In parallel, the field of computer vision has been striving to develop large autoregressive or world models~\cite{unified-io,lu2023unifiedio2,lvm}, aiming to emulate their impressive scalability and generalizability.
Trailblazing efforts like VQGAN and DALL-E~\cite{vqgan,dalle1} along with their successors~\cite{vqvae2,vit-vqgan,rq,movq} have showcased the potential of AR models in image generation. These models utilize a visual tokenizer to discretize continuous images into grids of 2D tokens, which are then flattened to a 1D sequence for AR learning (\figref{fig:intro}\,b), mirroring the process of sequential language modeling (\figref{fig:intro}\,a).
However, the scaling laws of these models remain underexplored, and more frustratingly, their performance \textbf{significantly lags} behind diffusion models \cite{dit,dit-github,rcg}, as shown in \figref{fig:cmp}.
In contrast to the remarkable achievements of LLMs, the power of autoregressive models in computer vision appears to be somewhat \textbf{locked}.

\begin{wrapfigure}[18]{r}{0.49\textwidth}
\centering
\vspace{-8pt}
\includegraphics[width=0.49\textwidth]{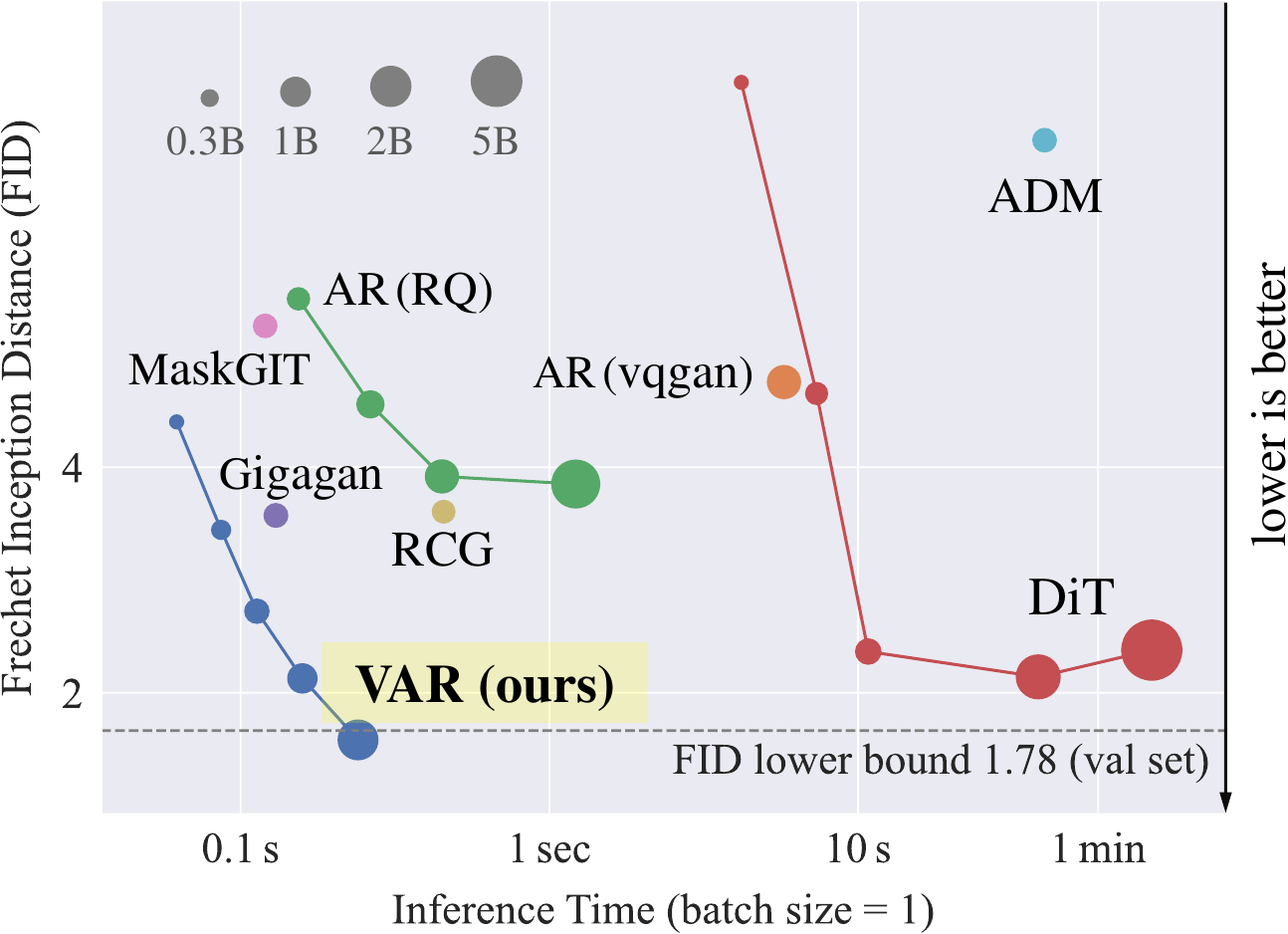}
\vspace{-12pt}
\caption{\small
\textbf{Scaling behavior} of different model families on ImageNet 256$\times$256 generation benchmark.
The FID of the validation set serves as a reference lower bound (1.78).
VAR with 2B parameters reaches an FID of 1.73, surpassing L-DiT with 3B or 7B parameters.
}
\label{fig:cmp}
\end{wrapfigure}

Autoregressive modeling requires defining the order of data.
Our work reconsiders how to ``order'' an image:
Humans typically perceive or create images in a hierachical manner, first capturing the global structure and then local details.
This \textbf{multi-scale, coarse-to-fine} nature suggests an ``order'' for images.
Also inspired by the widespread multi-scale designs~\cite{sift,fpn,spark,pggan}, we define autoregressive learning for images as ``next-scale prediction'' in \figref{fig:intro} (c), diverging from the conventional ``next-token prediction'' in \figref{fig:intro} (b).
Our approach begins by encoding an image into multi-scale token maps. The autoregressive process is then started from the 1$\times$1 token map, and progressively expands in resolution: at each step, the transformer predicts the next higher-resolution token map conditioned on all previous ones.
We refer to this methodology as Visual AutoRegressive (VAR) modeling.

VAR directly leverages GPT-2-like transformer architecture~\cite{gpt2} for visual autoregressive learning.
On the ImageNet 256$\times$256 benchmark, VAR significantly improves its AR baseline, achieving a Fréchet inception distance (FID) of 1.73 and an inception score (IS) of 350.2, with inference speed 20$\times$ faster (see \secref{sec:abla} for details).
Notably, VAR surpasses the Diffusion Transformer (DiT) -- the foundation of leading diffusion systems like Stable Diffusion 3.0 and SORA~\cite{stable-diffusion3,sora} -- in FID/IS, data efficiency, inference speed, and scalability.
VAR models also exhibit scaling laws akin to those witnessed in LLMs.
Lastly, we showcase VAR's zero-shot generalization capabilities in tasks like image in-painting, out-painting, and editing.
In summary, our contributions to the community include:

\begin{enumerate}[topsep=3.5pt,itemsep=3pt,leftmargin=20pt]
\item A new visual generative framework using a multi-scale autoregressive paradigm with next-scale prediction, offering new insights in autoregressive algorithm design for computer vision.
\item An empirical validation of VAR models' Scaling Laws and zero-shot generalization potential, which initially emulates the appealing properties of large language models (LLMs).
\item A breakthrough in visual autoregressive model performance, making GPT-style autoregressive methods surpass strong diffusion models in image synthesis \textit{for the first time}\footnote{\scalebox{0.95}{A related work \cite{magvit2} named ``language model beats diffusion'' belongs to BERT-style masked-prediction model.}}.
\item A comprehensive open-source code suite, including both VQ tokenizer and autoregressive model training pipelines, to help propel the advancement of visual autoregressive learning.
\end{enumerate}

\vspace{-2pt}
\section{Related Work} \label{sec:related}
\vspace{-2pt}
\vspace{-2pt}
\subsection{Properties of large autoregressive language models}
\vspace{-2pt}

\firstpara{Scaling laws} are found and studied in \textit{autoregressive} language models~\cite{scalinglaw,scalingar}, which describe a power-law relationship between the scale of model (or dataset, computation, \textit{etc.}) and the cross-entropy loss value on the test set.
Scaling laws allow us to directly predict the performance of a larger model from smaller ones \cite{gpt4}, thus guiding better resource allocation.
More pleasingly, they show that the performance of LLMs can scale well with the growth of model, data, and computation and never saturate, which is considered a key factor in the success of ~\cite{gpt3, llama1, llama2, opt, bloom, chinchilla}.
The success brought by scaling laws has inspired the vision community to explore more similar methods for multimodality understanding and generation~\cite{llava,alayrac2022flamingo,visionllm,dong2023dreamllm,cm3leon_chameleon,emu_baai,chen2023internvl,dai2023emu_meta,jin2023unified,ge2023seed_llama,ge2024seedx,tian2024mminterleaved,wang2024git}.


\para{Zero-shot generalization.} Zero-shot generalization~\cite{multitask_zeroshot} refers to the ability of a model, particularly a Large Language Model, to perform tasks that it has not been explicitly trained on.
Within the realm of the computer vision, there is a burgeoning interest in the zero-shot and in-context learning abilities of foundation models, CLIP~\cite{clip}, SAM~\cite{sam}, Dinov2~\cite{dinov2}. Innovations like Painter~\cite{painter} and LVM~\cite{lvm} extend visual prompters~\cite{visualprompttuning1,visualprompttuning2} to achieve in-context learning in vision.

\vspace{-3pt}
\subsection{Visual generation}
\vspace{-3pt}
\firstpara{Raster-scan autoregressive models} for visual generation necessitate the encoding of 2D images into 1D token sequences.
Early endeavors~\cite{igpt,van2016pixelcnn} have shown the ability to generate RGB (or grouped) pixels in the standard row-by-row, raster-scan manner.
\cite{reed2017mspixelcnn} extends \cite{van2016pixelcnn} by using multiple independent trainable networks to do super-resolution repeatedly.
VQGAN~\cite{vqgan} advances \cite{igpt,van2016pixelcnn} by doing autoregressive learning in the latent space of VQVAE~\cite{vqvae}.
It employs GPT-2 decoder-only transformer to generate tokens in the raster-scan order, like how ViT~\cite{vit} serializes 2D images into 1D patches.
VQVAE-2~\cite{vqvae2} and RQ-Transformer~\cite{rq} also follow this raster-scan manner but use extra scales or stacked codes.
Parti~\cite{parti}, based on the architecture of ViT-VQGAN~\cite{vit-vqgan}, scales the transformer to 20B parameters and works well in text-to-image synthesis.

\para{Masked-prediction model.} MaskGIT~\cite{maskgit} employs a VQ autoencoder and a masked prediction transformer similar to BERT~\cite{bert, beit, mae} to generate VQ tokens through a greedy algorithm.
MagViT~\cite{magvit} adapts this approach to videos, and MagViT-2~\cite{magvit2} enhances~\cite{maskgit,magvit} by introducing an improved VQVAE for both images and videos.
MUSE~\cite{muse} further scales MaskGIT to 3B parameters.

\para{Diffusion models}' progress has centered around improved learning or sampling~\cite{scorebased,ddim, dpm-solver,dpmpp,bao2022analytic}, guidance~\cite{cfg, glide}, latent learning~\cite{ldm}, and architectures~\cite{cdm, dit, imagen, raphael}.
DiT and U-ViT~\cite{dit,bao2023all} replaces or integrates the U-Net with transformer, and inspires recent image~\cite{chen2023pixart,chen2024pixart_sigma} or video synthesis systems~\cite{bar2024lumiere,gupta2023photorealistic} including Stable Diffusion 3.0~\cite{stable-diffusion3}, SORA~\cite{sora}, and Vidu~\cite{bao2024vidu}.

\section{Method} \label{sec:method}

\vspace{-3pt}
\subsection{Preliminary: autoregressive modeling via next-token prediction} \label{sec:ar}
\vspace{-2pt}

\firstpara{Formulation.}
Consider a sequence of discrete tokens $x = (x_1, x_2, \dots, x_T)$, where $x_t \in [V]$ is an integer from a vocabulary of size $V$.
The next-token autoregressive posits the probability of observing the current token $x_t$ depends only on its prefix $(x_1, x_2, \dots, x_{t-1})$.
This \textbf{unidirectional token dependency assumption} allows for the factorization of the sequence $x$'s likelihood:
\begin{align}
    p(x_1, x_2, \dots, x_T) = \prod_{t=1}^{T} p(x_t \mid x_1, x_2, \dots, x_{t-1}). \label{eq:ar}
\end{align}
Training an autoregressive model $p_\theta$ involves optimizing $p_\theta(x_t \mid x_1, x_2, \dots, x_{t-1})$ over a dataset.
This is known as the ``next-token prediction'', and the trained $p_\theta$ can generate new sequences.

\para{Tokenization.}
Images are inherently 2D continuous signals.
To apply autoregressive modeling to images via next-token prediction, we must:\, 1) tokenize an image into several \textit{discrete} tokens, and\, 2) define a 1D \textit{order} of tokens for unidirectional modeling.\, For 1), a quantized autoencoder such as~\cite{vqgan} is often used to convert the image feature map $f \in \mathbb{R}^{h\times w\times C}$ to discrete tokens $q \in [V]^{h\times w}$:
\begin{align}
    f = \mathcal{E}(im), \quad~~
    q = \mathcal{Q}(f),
\end{align}
where $im$ denotes the raw image, $\mathcal{E}(\cdot)$ a encoder, and $\mathcal{Q}(\cdot)$ a quantizer.
The quantizer typically includes a learnable codebook $Z \in \mathbb{R}^{V\times C}$ containing $V$ vectors.
The quantization process $q = \mathcal{Q}(f)$ will map each feature vector $f^{(i,j)}$ to the code index $q^{(i,j)}$ of its nearest code in the Euclidean sense:
\begin{align}
    q^{(i,j)} = \left( \argmin_{v \in [V]} \| \text{lookup}(Z, v) - f^{(i,j)} \|_2 \right) \in [V],
\end{align}
where $\text{lookup}(Z, v)$ means taking the $v$-th vector in codebook $Z$.
To train the quantized autoencoder, $Z$ is looked up by every $q^{(i,j)}$ to get $\hat{f}$, the approximation of original $f$. Then a new image $\hat{im}$ is reconstructed using the decoder $\mathcal{D}(\cdot)$ given $\hat{f}$, and a compound loss $\mathcal{L}$ is minimized:
\begin{align}
    \hat{f} &= \text{lookup}(Z, q), 
    \quad\quad\quad~~~ \hat{im} = \mathcal{D}(\hat{f}), \\
    \mathcal{L} &= \|im - \hat{im}\|_2 + \|f - \hat{f}\|_2 + \lambda_{\text{P}} \mathcal{L}_{\text{P}}(\hat{im}) + \lambda_{\text{G}} \mathcal{L}_{\text{G}}(\hat{im}), \label{eq:vaeloss}
\end{align}
where $\mathcal{L}_{\text{P}}(\cdot)$ is a perceptual loss such as LPIPS~\cite{lpips}, $\mathcal{L}_{\text{G}}(\cdot)$ a discriminative loss like StyleGAN's discriminator loss~\cite{stylegan}, and $\lambda_{\text{P}}$, $\lambda_{\text{G}}$ are loss weights.
Once the autoencoder $\{\mathcal{E}, \mathcal{Q}, \mathcal{D}\}$ is fully trained, it will be used to tokenize images for subsequent training of a unidirectional autoregressive model. 

\vspace{2pt}
The image tokens in $q \in [V]^{h\times w}$ are arranged in a 2D grid.
Unlike natural language sentences with an inherent left-to-right ordering, the order of image tokens must be explicitly defined for unidirectional autoregressive learning.
Previous AR methods~\cite{vqgan,vit-vqgan,rq} flatten the 2D grid of $q$ into a 1D sequence $x = (x_1, \dots, x_{h\times w})$ using some strategy such as row-major raster scan, spiral, or z-curve order.
Once flattened, they can extract a set of sequences $x$ from the dataset, and then train an autoregressive model to maximize the likelihood in \eqref{eq:ar} via next-token prediction.

\para{Discussion on the weakness of vanilla autoregressive models.}
The above approach of tokenizing and flattening enable next-token autoregressive learning on images, but introduces several issues:

\begin{enumerate}[label=\arabic*),topsep=1pt,itemsep=2pt,leftmargin=20pt]
\item \textbf{Mathematical premise violation.}\, In quantized autoencoders (VQVAEs), the encoder typically produces an image feature map $f$ with inter-dependent feature vectors $f^{(i,j)}$ for all $i,j$.
So after quantization and flattening, the token sequence $(x_1, x_2, \dots, x_{h\times w})$ retains \textit{bidirectional} correlations.
This contradicts the \textit{unidirectional} dependency assumption of autoregressive models, which dictates that each token $x_t$ should only depend on its prefix $(x_1, x_2, \dots, x_{t-1})$.

\item \textbf{Inability to perform some zero-shot generalization.}\, Similar to issue 1), The unidirectional nature of image autoregressive modeling restricts their generalizability in tasks requiring bidirectional reasoning. E.g., it cannot predict the top part of an image given the bottom part.

\item \textbf{Structural degradation.}\, The flattening disrupts the spatial locality inherent in image feature maps. For example, the token $q^{(i,j)}$ and its 4 immediate neighbors $q^{(i\pm 1,j)}$, $q^{(i,j\pm 1)}$ are closely correlated due to their proximity.
This spatial relationship is compromised in the linear sequence $x$, where \textit{uni}directional constraints diminish these \textit{cor}relations.

\item \textbf{Inefficiency.}\, Generating an image token sequence $x = (x_1, x_2, \dots, x_{n\times n})$ with a conventional self-attention transformer incurs $\mathcal{O}(n^2)$ autoregressive steps and $\mathcal{O}(n^6)$ computational cost.
\end{enumerate}

Issues 2) and 3) are evident (see examples above).
Regarding issue 1), we present empirical evidence in \appref{app:dependency}.
The proof of issue 3) is detailed in \appref{app:complexity}.
These theoretical and practical limitations call for a rethinking of autoregressive models in the context of image generation.

\vspace{8pt}
\begin{figure}[htb]
\begin{center}
    \includegraphics[width=\linewidth]{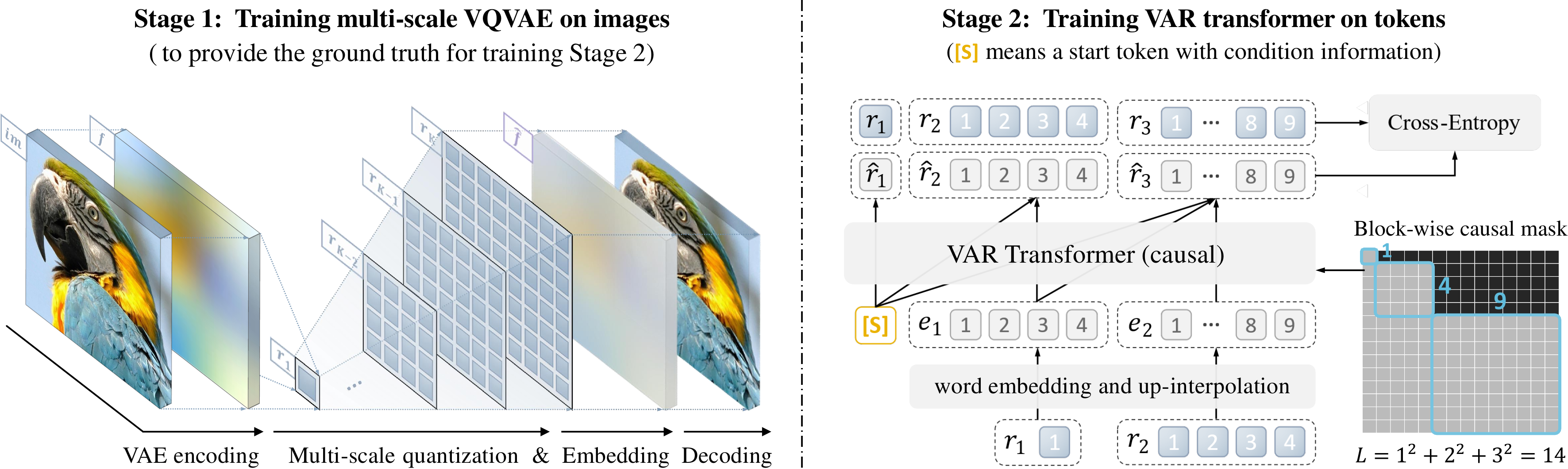}
\end{center}
\vspace{-4pt}
\caption{\small
\textbf{VAR involves two separated training stages.}
\textbf{Stage 1:} a multi-scale VQ autoencoder encodes an image into $K$ token maps $R=(r_1, r_2, \dots, r_K)$ and is trained by a compound loss \eqref{eq:vaeloss}.
For details on ``Multi-scale quantization'' and ``Embedding'', check \algref{alg:enc} and \ref{alg:dec}.
\,\textbf{Stage 2:} a VAR transformer is trained via next-scale prediction \eqref{eq:var}: it takes $(\texttt{[s]}, r_1, r_2, \dots, r_{K-1})$ as input to predict $(r_1, r_2, r_3, \dots, r_K)$. The attention mask is used in training to ensure each $r_k$ can only attend to $r_{\le k}$. Standard cross-entropy loss is used.
\vspace{-4pt}
}
\label{fig:method}
\end{figure}

\subsection{Visual autoregressive modeling via next-scale prediction} \label{sec:var}
\vspace{-2pt}

\firstpara{Reformulation.}
We reconceptualize the autoregressive modeling on images by shifting from ``next-token prediction'' to ``next-scale prediction'' strategy.
Here, the autoregressive unit is \textit{an entire token map}, rather than \textit{a single token}.
We start by quantizing a feature map $f \in \mathbb{R}^{h\times w\times C}$ into $K$ multi-scale token maps $(r_1, r_2, \dots, r_K)$, each at a increasingly  higher resolution $h_k\times w_k$, culminating in $r_K$ matches the original feature map's resolution $h\times w$.
The autoregressive likelihood is formulated as:
\begin{align}
    p(r_1, r_2, \dots, r_K) = \prod_{k=1}^{K} p(r_k \mid r_1, r_2, \dots, r_{k-1}),  \label{eq:var}
\end{align}
where each autoregressive unit $r_k \in [V]^{h_k \times w_k}$ is the token map at scale $k$ containing $h_k \times w_k$ tokens, and the sequence $(r_1, r_2, \dots, r_{k-1})$ serves as the the ``prefix'' for $r_k$.
During the $k$-th autoregressive step, all distributions over the $h_k \times w_k$ tokens in $r_k$ will be generated in parallel, conditioned on $r_k$'s prefix and associated $k$-th position embedding map.
This ``next-scale prediction'' methodology is what we define as visual autoregressive modeling (VAR), depicted on the right side of \figref{fig:method}.
Note that in the training of VAR, a block-wise causal attention mask is used to ensure that each $r_k$ can only attend to its prefix $r_{\le k}$.
During inference, kv-caching can be used and no mask is needed.

\para{Discussion.}
VAR addresses the previously mentioned three issues as follows:

\begin{enumerate}[label=\arabic*),topsep=0.7pt,itemsep=2pt,leftmargin=20pt]
\item The mathematical premise is satisfied if we constrain each $r_k$ to depend only on its prefix, that is, the process of getting $r_k$ is solely related to $r_{\le k}$.
This constraint is acceptable as it aligns with the natural, coarse-to-fine progression characteristics like human visual perception and artistic drawing (as we discussed in \secref{sec:intro}).
Further details are provided in the \textit{Tokenization} below.

\item The spatial locality is preserved as (i) there is no flattening operation in VAR, and (ii) tokens in each $r_k$ are fully correlated. The multi-scale design additionally reinforces the spatial structure.

\item The complexity for generating an image with $n\times n$ latent is significantly reduced to $\mathcal{O}(n^4)$, see Appendix for proof.
This efficiency gain arises from the \textit{parallel} token generation in each $r_k$.
\end{enumerate}

\para{Tokenization.}
We develope a new multi-scale quantization autoencoder to encode an image to $K$ multi-scale discrete token maps $R=(r_1, r_2, \dots, r_K)$ necessary for VAR learning \eqref{eq:var}.
We employ the same architecture as VQGAN~\cite{vqgan} but with a modified multi-scale quantization layer.
The encoding and decoding procedures with residual design on $f$ or $\hat{f}$ are detailed in algorithms \ref{alg:enc} and \ref{alg:dec}.
We empirically find this residual-style design, akin to~\cite{rq}, can perform better than independent interpolation.
\Algref{alg:enc} shows that each $r_k$ would depend only on its prefix $(r_1, r_2, \dots, r_{k-1})$.
Note that a shared codebook $Z$ is utilized across all scales, ensuring that each $r_k$'s tokens belong to the same vocabulary $[V]$.
To address the information loss in upscaling $z_k$ to $h_K\times w_K$, we use $K$ extra convolution layers $\{\phi_k\}_{k=1}^K$.
No convolution is used after downsampling $f$ to $h_k\times w_k$.

\begin{center}
\begin{minipage}[t]{0.5\linewidth}
  \centering
  \scalebox{0.86}
  {
  \begin{algorithm}[H]
    \caption{\small{~Multi-scale VQVAE Encoding}} \label{alg:enc}
    \small{
    \textbf{Inputs: } raw image $im$\;
    \textbf{Hyperparameters: } steps $K$, resolutions $(h_k,w_k)_{k=1}^{K}$\;
    $f = \mathcal{E}(im)$, $R=[]$\;
    \For {$k=1,\cdots,K$}
    {
    $r_k = \mathcal{Q}(\text{interpolate}(f, h_k, w_k))$\;
    $R = \text{queue\_push}(R, r_k)$\;
    $z_k = \text{lookup}(Z, r_k)$\;
    $z_k = \text{interpolate}(z_k, h_K, w_K)$\;
    $f = f - \phi_k(z_k)$\;
    }
    \textbf{Return: } multi-scale tokens $R$\;
    }
  \end{algorithm}
  }
\end{minipage}%
\begin{minipage}[t]{0.5\linewidth}
  \centering
  \scalebox{0.81}
  {
  \begin{algorithm}[H]
    \caption{\small{~Multi-scale VQVAE Reconstruction}} \label{alg:dec}
    \small{
    \textbf{Inputs: } multi-scale token maps $R$\;
    \textbf{Hyperparameters: } steps $K$, resolutions $(h_k,w_k)_{k=1}^{K}$\;
    $\hat{f} = 0 $\;
    \For {$k=1,\cdots,K$}
    {
    $r_k = \text{queue\_pop}(R)$\;
    $z_k = \text{lookup}(Z, r_k)$\;
    $z_k = \text{interpolate}(z_k, h_K, w_K)$\;
    $\hat{f} = \hat{f} + \phi_k(z_k)$\;
    }
    $\hat{im} = \mathcal{D}(\hat{f}) $\;
    \textbf{Return: } reconstructed image $\hat{im}$\;
    }
  \end{algorithm}
  }
\end{minipage}
\end{center}

\section{Implementation details} \label{sec:impl}
\firstpara{VAR tokenizer.}
As aforementioned, we use the vanilla VQVAE architecture~\cite{vqgan} and a multi-scale quantization scheme with $K$ extra convolutions (0.03M extra parameters).
We use a shared codebook for all scales with $V=4096$.
Following the baseline \cite{vqgan}, our tokenizer is also trained on OpenImages~\cite{openimages} with the compound loss \eqref{eq:vaeloss} and a spatial downsample ratio of $16\times$.

\para{VAR transformer.}
Our main focus is on VAR algorithm so we keep a simple model architecture design.
We adopt the architecture of standard decoder-only transformers akin to GPT-2 and VQGAN~\cite{gpt2,vqgan} with adaptive normalization (AdaLN), which has widespread adoption and proven effectiveness in many visual generative models~\cite{stylegan,stylegan2,stylegan3,stylegan-xl,stylegan-t,gigagan,dit,chen2023pixart}.
For class-conditional synthesis, we use the class embedding as the start token \texttt{[s]} and also the condition of AdaLN.
We found normalizing $queries$ and $keys$ to unit vectors before attention can stablize the training.
We do not use advanced techniques in large language models, such as rotary position embedding (RoPE), SwiGLU MLP, or RMS Norm~\cite{llama1,llama2}.
Our model shape follows a simple rule like \cite{scalinglaw} that the width $w$, head counts $h$, and drop rate $dr$ are linearly scaled with the depth $d$ as follows:
\begin{align}
    w = 64d,\quad\quad h = d,\quad\quad dr = 0.1\cdot d/24.
\end{align}\vspace{-3pt}
Consequently, the main parameter count $N$ of a VAR transformer with depth $d$ is given by\footnote{Due to resource limitation, we use a single shared adaptive layernorm (AdaLN) acorss all attention blocks in 512$\times$512 synthesis. In this case, the parameter count would be reduced to around $12dw^2 + 6w^2 \approx 49152\,d^3$.
}:
\vspace{1pt}
\begin{align}
    N(d) = \underbrace{d\cdot4w^2}_\text{self-attention} + \underbrace{d\cdot8w^2}_\text{feed-forward} + \underbrace{d\cdot6w^2}_\text{adaptive layernorm} = 18\,dw^2 = 73728\,d^3. \label{eq:param}
\end{align}\vspace{-4pt}

All models are trained with the similar settings: a base learning rate of $10^{-4}$ per 256 batch size, an AdamW optimizer with $\beta_1=0.9$, $\beta_2=0.95$, $\text{decay}=0.05$, a batch size from 768 to 1024 and training epochs from 200 to 350 (depends on model size).
The evaluations in \secref{sec:exp} suggest that such a simple model design are capable of scaling and generalizing well.

\section{Empirical Results} \label{sec:exp}
\vspace{-2pt}

This section first compares VAR with other image generative model families in \secref{sec:sota}.
Evaluations on the scalability and generalizability of VAR models are presented in \secref{sec:law} and \appref{sec:zero}.
For implementation details and ablation study, please see \appref{sec:impl} and \appref{sec:abla}.

\begin{table}[!th]
\renewcommand\arraystretch{1.05}
\centering
\setlength{\tabcolsep}{2.5mm}{}
\small
{
\caption{\smallcaption
\textbf{Generative model family comparison on class-conditional ImageNet 256$\times$256}.
``$\downarrow$'' or ``$\uparrow$'' indicate lower or higher values are better.
Metrics include Fréchet inception distance (FID), inception score (IS), precision (Pre) and recall (rec).
``\#Step'': the number of model runs needed to generate an image.
Wall-clock inference time relative to VAR is reported.
Models with the suffix ``-re'' used rejection sampling.
$\dag$: taken from MaskGIT~\cite{maskgit}.
}\label{tab:main}
\vspace{-2pt}
\scalebox{0.98}
{
\begin{tabular}{c|l|cc|cc|cc|c}
\toprule
Type & Model          & FID$\downarrow$ & IS$\uparrow$ & Pre$\uparrow$ & Rec$\uparrow$ & \#Para & \#Step & Time \\
\midrule
GAN   & BigGAN~\cite{biggan}  & 6.95  & 224.5       & \textbf{0.89} & 0.38 & 112M & 1    & $-$    \\
GAN   & GigaGAN~\cite{gigagan}     & 3.45  & 225.5       & 0.84 & \textbf{0.61} & 569M & 1    & $-$ \\
GAN   & StyleGan-XL~\cite{stylegan-xl}  & 2.30  & 265.1       & 0.78 & 0.53 & 166M & 1    & 0.3~\cite{stylegan-xl}   \\
\midrule
Diff. & ADM~\cite{adm}         & 10.94 & 101.0        & 0.69 & 0.63 & 554M & 250  & 168~\cite{stylegan-xl}   \\
Diff. & CDM~\cite{cdm}         & 4.88  & 158.7       & $-$  & $-$  & $-$  & 8100 & $-$    \\
Diff. & LDM-4-G~\cite{ldm}     & 3.60  & 247.7       & $-$  & $-$  & 400M & 250  & $-$    \\
Diff. & DiT-L/2~\cite{dit}     & 5.02  & 167.2       & 0.75 & 0.57 & 458M & 250  & 31     \\
Diff. & DiT-XL/2~\cite{dit}    & 2.27  & 278.2       & 0.83 & 0.57 & 675M & 250  & 45     \\
Diff. & L-DiT-3B~\cite{dit-github}    & 2.10  & 304.4       & 0.82 & 0.60 & 3.0B & 250  & $>$45     \\
Diff. & L-DiT-7B~\cite{dit-github}    & 2.28  & 316.2       & 0.83 & 0.58 & 7.0B & 250  & $>$45     \\
\midrule
Mask. & MaskGIT~\cite{maskgit}     & 6.18  & 182.1        & 0.80 & 0.51 & 227M & 8    & 0.5~\cite{maskgit}  \\
Mask. & RCG (cond.)~\cite{rcg}  & 3.49  & 215.5        & $-$  & $-$  & 502M & 20  & 1.9~\cite{rcg}     \\
\midrule
AR   & VQVAE-2$^\dag$~\cite{vqvae2} & 31.11           & $\sim$45     & 0.36           & 0.57          & 13.5B    & 5120    & $-$  \\
AR    & VQGAN$^\dag$~\cite{vqgan} & 18.65 & 80.4         & 0.78 & 0.26 & 227M & 256  & 19~\cite{maskgit}   \\
AR    & VQGAN~\cite{vqgan}       & 15.78 & 74.3   & $-$  & $-$  & 1.4B & 256  & 24     \\
AR    & VQGAN-re~\cite{vqgan}    & 5.20  & 280.3  & $-$  & $-$  & 1.4B & 256  & 24     \\
AR    & ViTVQ~\cite{vit-vqgan}& 4.17  & 175.1  & $-$  & $-$  & 1.7B & 1024  & $>$24     \\
AR    & ViTVQ-re~\cite{vit-vqgan}& 3.04  & 227.4  & $-$  & $-$  & 1.7B & 1024  & $>$24     \\
AR    & RQTran.~\cite{rq}        & 7.55  & 134.0  & $-$  & $-$  & 3.8B & 68  & 21    \\
AR    & RQTran.-re~\cite{rq}     & 3.80  & 323.7  & $-$  & $-$  & 3.8B & 68  & 21    \\
\midrule
VAR   & VAR-$d16$       & 3.30  & 274.4 & 0.84 & 0.51 & 310M & 10   & 0.4      \\
VAR   & VAR-$d20$       & 2.57  & 302.6 & 0.83 & 0.56 & 600M & 10   & 0.5      \\
VAR   & VAR-$d24$       & 2.09  & 312.9 & 0.82 & 0.59 & 1.0B & 10   & 0.6      \\
VAR   & VAR-$d30$       & 1.92  & 323.1 & 0.82 & 0.59 & 2.0B & 10   & 1      \\
VAR   & VAR-$d30$-re    & \textbf{1.73}  & \textbf{350.2} & 0.82 & 0.60 & 2.0B & 10   & 1      \\
		& \graycell{(validation data)}   & \graycell{1.78}  & \graycell{236.9} & \graycell{0.75} & \graycell{0.67} &      &      &  \\
\bottomrule
\end{tabular}
}
\vspace{-4pt}
}
\end{table}

\vspace{-0.8pt}
\subsection{State-of-the-art image generation} \label{sec:sota}
\vspace{-0.8pt}

\para{Setup.}
We test VAR models with depths 16, 20, 24, and 30 on ImageNet 256$\times$256 and 512$\times$512 conditional generation benchmarks and compare them with the state-of-the-art image generation model families.
Among all VQVAE-based AR or VAR models, VQGAN~\cite{vqgan} and ours use the same architecture (CNN) and training data (OpenImages \cite{openimages}) for VQVAE, while ViT-VQGAN~\cite{vit-vqgan} uses a ViT autoencoder, and both it and RQTransformer~\cite{rq} trains the VQVAE directly on ImageNet.
The results are summaried in \tabref{tab:main} and \tabref{tab:512}.

\para{Overall comparison.}
In comparison with existing generative approaches including generative adversarial networks (GAN), diffusion models (Diff.), BERT-style masked-prediction models (Mask.), and GPT-style autoregressive models (AR), our visual autoregressive (VAR) establishes a new model class.
As shown in \tabref{tab:main}, VAR not only achieves the best FID/IS but also demonstrates remarkable speed in image generation.
VAR also maintains decent precision and recall, confirming its semantic consistency.
These advantages hold true on the 512$\times$512 synthesis benchmark, as detailed in \tabref{tab:512}.
Notably, VAR significantly advances traditional AR capabilities. To our knowledge, this is the \textit{first time} of autoregressive models outperforming Diffusion transformers, a milestone made possible by VAR's resolution of AR limitations discussed in Section~\ref{sec:method}.

\begin{wraptable}[14]{r}{0.49\textwidth}
\renewcommand\arraystretch{1.06}
\centering
\small
{
\vspace{-12pt}
\caption{\smallcaption
\textbf{ImageNet 512$\times$512 conditional generation.}
$\dag$: quoted from MaskGIT~\cite{maskgit}. ``-s'': a single shared AdaLN layer is used due to resource limitation.
}\label{tab:512}
\vspace{5pt}
{
\begin{tabular}{c|l|ccc}
\toprule
Type & Model          & FID$\downarrow$ & IS$\uparrow$   & Time \\
\midrule
GAN   & BigGAN~\cite{biggan}            & 8.43  & 177.9  & $-$ \\
\midrule
Diff. & ADM~\cite{adm}                  & 23.24 & 101.0  & $-$ \\
Diff. & DiT-XL/2~\cite{dit}             & 3.04  & 240.8  & 81 \\
\midrule
Mask. & MaskGIT~\cite{maskgit}          & 7.32  & 156.0  & 0.5$^\dag$ \\
\midrule
AR    & VQGAN~\cite{vqgan}              & 26.52 & 66.8   & 25$^\dag$ \\
VAR   & VAR-$d36$-s    & \textbf{2.63}  & \textbf{303.2} & 1 \\
\bottomrule
\end{tabular}
}
\vspace{-3pt}
}
\end{wraptable}

\para{Efficiency comparison.}
Conventional autoregressive (AR) models~\cite{vqgan,vqvae2,vit-vqgan,rq} suffer a lot from the high computational cost, as the number of image tokens is quadratic to the image resolution.
A full autoregressive generation of $n^2$ tokens requires $\mathcal{O}(n^2)$ decoding iterations and $\mathcal{O}(n^6)$ total computations.
In contrast, VAR only requires $\mathcal{O}(\log(n))$ iterations and $\mathcal{O}(n^4)$ total computations.
The wall-clock time reported in \tabref{tab:main} also provides empirical evidence that VAR is around 20 times faster than VQGAN and ViT-VQGAN even with more model parameters, reaching the speed of efficient GAN models which only require 1 step to generate an image.

\vspace{1pt}
\para{Compared with popular diffusion transformer.}
The VAR model surpasses the recently popular diffusion models Diffusion Transformer (DiT), which serves as the precursor to the latest Stable-Diffusion 3~\cite{stable-diffusion3} and SORA~\cite{sora}, in multiple dimensions:
1) In image generation diversity and quality (FID and IS), VAR with 2B parameters consistently performs better than DiT-XL/2~\cite{dit}, L-DiT-3B, and L-DiT-7B~\cite{dit-github}. VAR also maintains comparable precision and recall.
2) For inference speed, the DiT-XL/2 requires 45$\times$ the wall-clock time compared to VAR, while 3B and 7B models~\cite{dit-github} would cost much more.
3) VAR is considered more data-efficient, as it requires only 350 training epochs compared to DiT-XL/2's 1400.
4) For scalability, \figref{fig:cmp} and \tabref{tab:main} show that DiT only obtains marginal or even negative gains beyond 675M parameters.
In contrast, the FID and IS of VAR are consistently improved, aligning with the scaling law study in \secref{sec:law}.
These results establish \textit{VAR as potentially a more efficient and scalable model for image generation than models like DiT}.

\vspace{4pt}
\begin{figure}[th]
\begin{center}
\includegraphics[width=\linewidth]{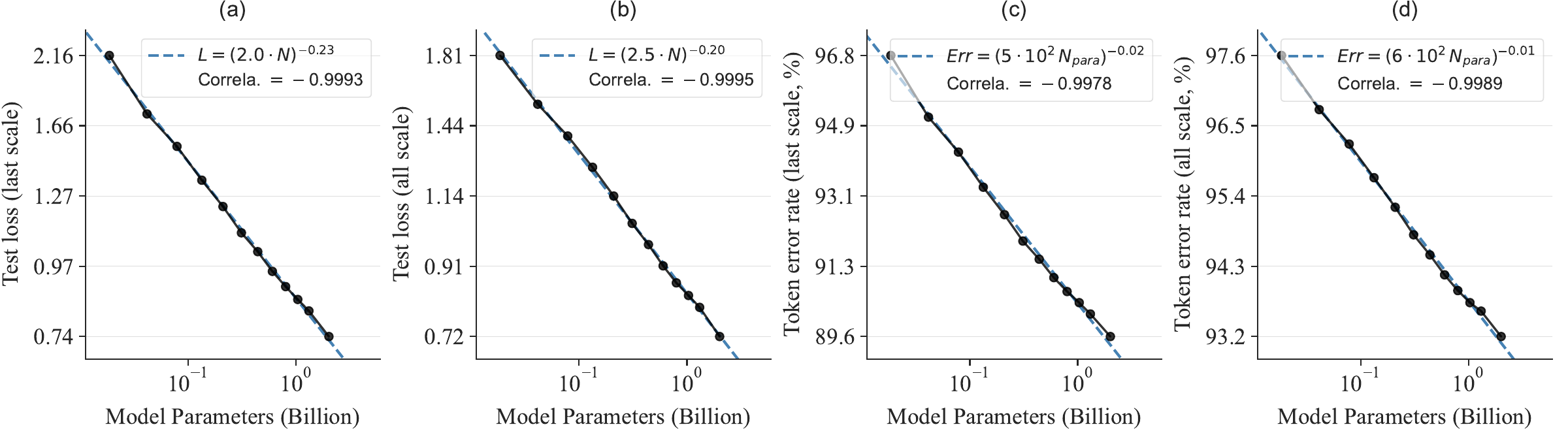}
\end{center}
\vspace{-2pt}
\caption{\small
\textbf{Scaling laws with VAR transformer size $N$}, with power-law fits (dashed) and equations (in legend).
Small, near-zero exponents $\alpha$ suggest a smooth decline in both test loss $L$ and token error rate $Err$ when scaling up VAR transformer.
Axes are all on a logarithmic scale.
The Pearson correlation coefficients near $-0.998$ signify a strong linear relationship between $log(N)$ \textit{vs.} $log(L)$ or $log(N)$ \textit{vs.} $log(Err)$.
\vspace{-6pt}
}
\label{fig:P}
\end{figure}

\vspace{4pt}
\subsection{Power-law scaling laws} \label{sec:law}

\firstpara{Background.}
Prior research~\cite{scalinglaw,scalingar,chinchilla,gpt4} have established that scaling up autoregressive (AR) large language models (LLMs) leads to a predictable decrease in test loss $L$.
This trend correlates with parameter counts $N$, training tokens $T$, and optimal training compute $C_\text{min}$, following a power-law:\vspace{1pt}
\begin{align}
    L = (\beta \cdot X)^{\alpha},
\end{align}
where $X$ can be any of $N$, $T$, or $C_\text{min}$.
The exponent $\alpha$ reflects the smoothness of power-law, and $L$ denotes the reducible loss normalized by irreducible loss $L_{\infty}$~\cite{scalingar}\footnote{See~\cite{scalingar} for some theoretical explanation on scaling laws on negative-loglikelihood losses.}.
A logarithmic transformation to $L$ and $X$ will reveal a linear relation between $\log(L)$ and $\log(X)$:\vspace{1pt}
\begin{align}
    \log(L) = \alpha \log (X) + \alpha \log\beta.
\end{align}
An appealing phenomenon is that both \cite{scalinglaw} and \cite{scalingar} never observed deviation from these linear relationships at the higher end of $X$, although flattening is inevitable as the loss approaches zero.

These observed scaling laws~\cite{scalinglaw,scalingar,chinchilla,gpt4} not only validate the scalability of LLMs but also serve as a predictive tool for AR modeling, which facilitates the estimation of performance for larger AR models based on their smaller counterparts, thereby saving resource usage by large model performance forecasting.
Given these appealing properties of scaling laws brought by LLMs, their replication in computer vision is therefore of significant interest.

\vspace{4pt}
\para{Setup of scaling VAR models.}
Following the protocols from~\cite{scalinglaw,scalingar,chinchilla,gpt4}, we examine whether our VAR model complies with similar scaling laws.
We trained models across 12 different sizes, from 18M to 2B parameters, on the ImageNet training set~\cite{imagenet} containing 1.28M images (or 870B image tokens under our VQVAE) per epoch.
For models of different sizes, training spanned 200 to 350 epochs, with a maximum number of tokens reaching 305 billion.
Below we focus on the scaling laws with model parameters $N$ and optimal training compute $C_\text{min}$ given sufficient token count $T$.

\vspace{4pt}
\para{Scaling laws with model parameters $N$.}
We first investigate the test loss trend as the VAR model size increases.
The number of parameters $N(d)=73728\,d^3$ for a VAR transformer with depth $d$ is specified in \eqref{eq:param}.
We varied $d$ from $6$ to $30$, yielding 12 models with 18.5M to 2.0B parameters. We assessed the final test cross-entropy loss $L$ and token prediction error rates $Err$ on the ImageNet validation set of 50,000 images~\cite{imagenet}.
We computed $L$ and $Err$ for both the last scale (at the last next-scale autoregressive step), as well as the global average.
Results are plotted in \figref{fig:P}, where we

\vspace{-5pt}
\begin{figure}[th]
\begin{center}
\includegraphics[width=\linewidth]{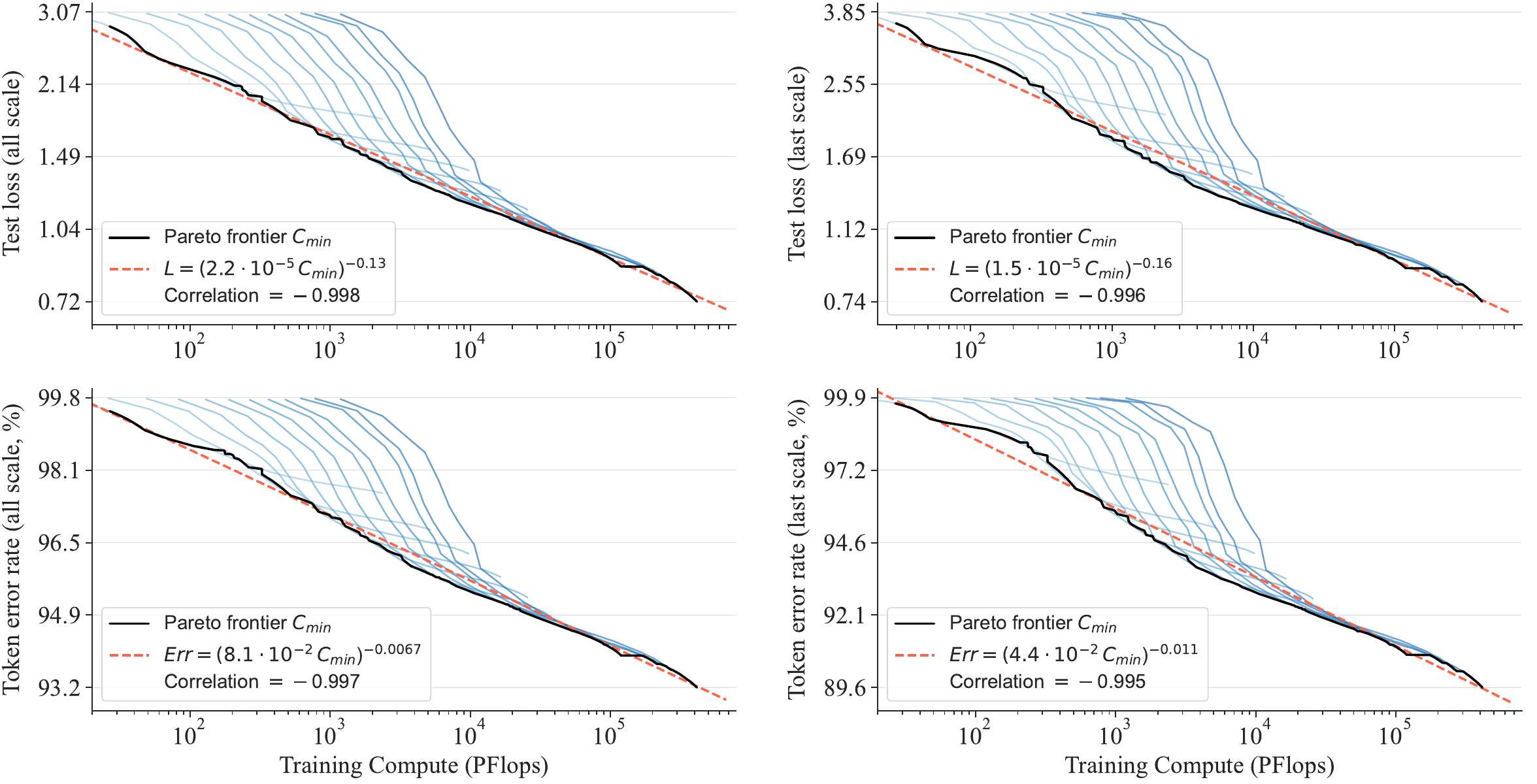}
\end{center}
\vspace{-4pt}
\caption{\small
\textbf{Scaling laws with optimal training compute $C_\text{min}$.}
Line color denotes different model sizes.
Red dashed lines are power-law fits with equations in legend.
Axes are on a logarithmic scale.
Pearson coefficients near $-0.99$ indicate strong linear relationships between $\log(C_\text{min})$ \textit{vs.} $\log(L)$ or $\log(C_\text{min})$ \textit{vs.} $\log(Err)$.
}
\vspace{-4pt}
\label{fig:C}
\end{figure}

observed a clear power-law scaling trend for $L$ as a function of $N$, as consistent with~\cite{scalinglaw,scalingar,chinchilla,gpt4}. The power-law scaling laws can be expressed as:
\begin{align}
    &L_\text{last} = (2.0 \cdot N)^{-0.23} \quad \text{and} \quad L_\text{avg} = (2.5 \cdot N)^{-0.20}.
\end{align}

Although the scaling laws are mainly studied on the test loss, we also empirically observed similar power-law trends for the token error rate $Err$:\vspace{4pt}
\begin{align}
    &Err_\text{last} = (4.9 \cdot 10^2 N)^{-0.016} \quad \text{and} \quad Err_\text{avg} = (6.5 \cdot 10^2 N)^{-0.010}.
\end{align}\vspace{-12pt}

These results verify the strong scalability of VAR, by which scaling up VAR transformers can continuously improve the model's test performance.

\para{Scaling laws with optimal training compute $C_\text{min}$.}
We then examine the scaling behavior of VAR transformers when increasing training compute $C$.
For each of the 12 models, we traced the test loss $L$ and token error rate $Err$ as a function of $C$ during training quoted in PFlops ($10^{15}$ floating-point operations per second).
The results are plotted in \figref{fig:C}.
Here, we draw the Pareto frontier of $L$ and $Err$ to highlight the optimal training compute $C_\text{min}$ required to reach a certain value of loss or error.

The fitted power-law scaling laws for $L$ and $Err$ as a function of $C_\text{min}$ are:
\begin{align}
    &L_\text{last} = (2.2 \cdot 10^{-5} C_\text{min})^{-0.13} \\
    &L_\text{avg} = (1.5 \cdot 10^{-5} C_\text{min})^{-0.16}, \label{equ:claw1} \\
    &Err_\text{last} = (8.1 \cdot 10^{-2} C_\text{min})^{-0.0067} \\
    &Err_\text{avg} = (4.4 \cdot 10^{-2} C_\text{min})^{-0.011}. \label{equ:claw2}
\end{align}

These relations (\ref{equ:claw1}, \ref{equ:claw2}) hold across 6 orders of magnitude in $C_\text{min}$, and our findings are consistent with those in~\cite{scalinglaw,scalingar}: when trained with sufficient data, larger VAR transformers are more compute-efficient because they can reach the same level of performance with less computation.

\subsection{Visualization of scaling effect} \label{sec:viss}
To better understand how VAR models are learning when scaled up, we compare some generated $256\times256$ samples from VAR models of 4 different sizes (depth 6, 16, 26, 30) and 3 different training stages (20\%, 60\%, 100\% of total training tokens) in \figref{fig:9grid}.
To keep the content consistent, a same random seed and teacher-forced initial tokens are used.
The observed improvements in visual fidelity and soundness are consistent with the scaling laws, as larger transformers are thought able to learn more complex and fine-grained image distributions.

\begin{figure}[!th]
\begin{center}
\includegraphics[width=\linewidth]{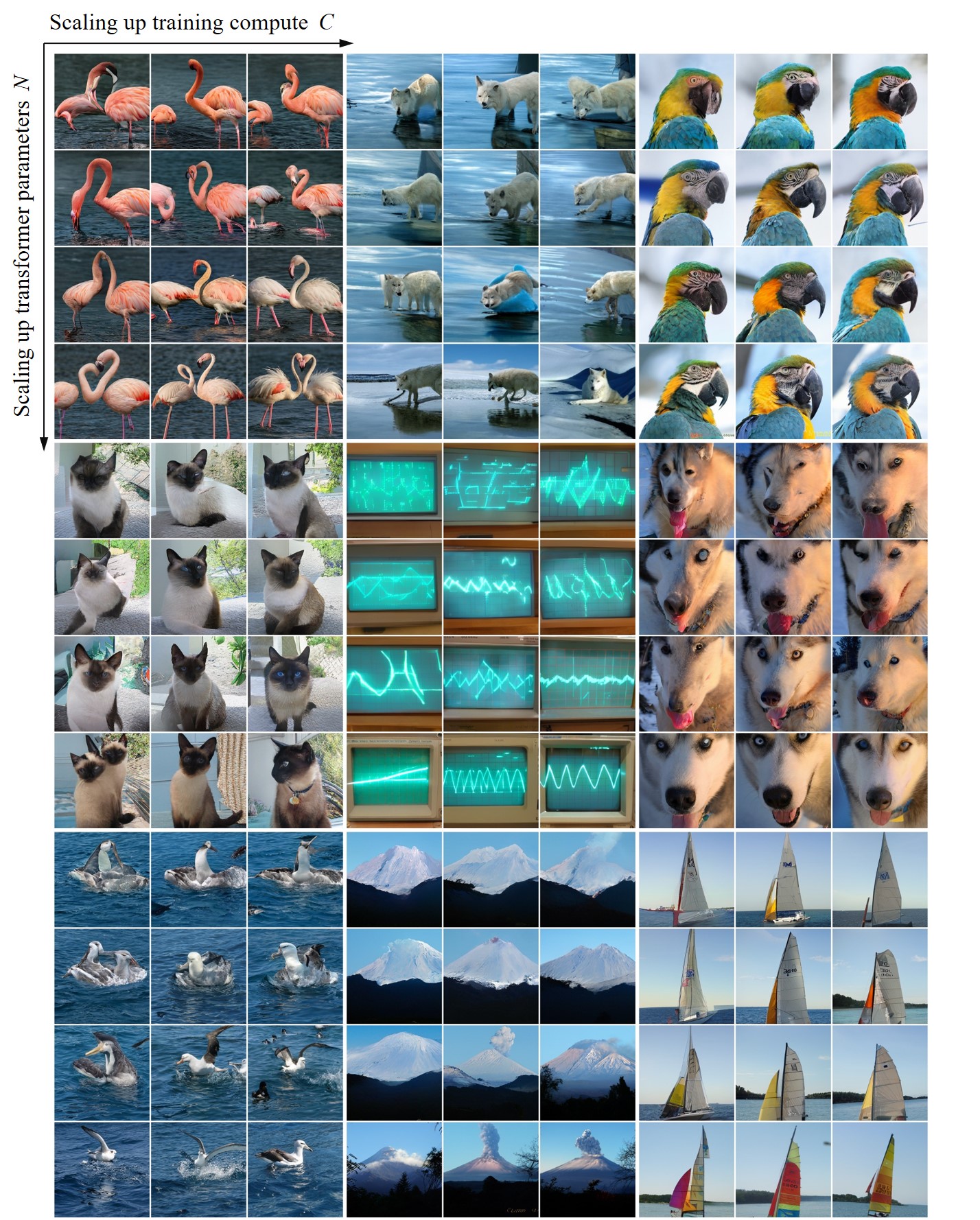}
\end{center}
\vspace{-6pt}
\caption{\small
\textbf{Scaling model size $N$ and training compute $C$ improves visual fidelity and soundness.}
Zoom in for a better view.
Samples are drawn from VAR models of 4 different sizes and 3 different training stages.
9 class labels (from left to right, top to bottom) are: flamingo \texttt{130}, arctic wolf \texttt{270}, macaw \texttt{88}, Siamese cat \texttt{284}, oscilloscope \texttt{688}, husky \texttt{250}, mollymawk \texttt{146}, volcano \texttt{980}, and catamaran \texttt{484}.
}
\label{fig:9grid}
\end{figure}

\begin{figure}[tbh]
\begin{center}
    \includegraphics[width=0.95\linewidth]{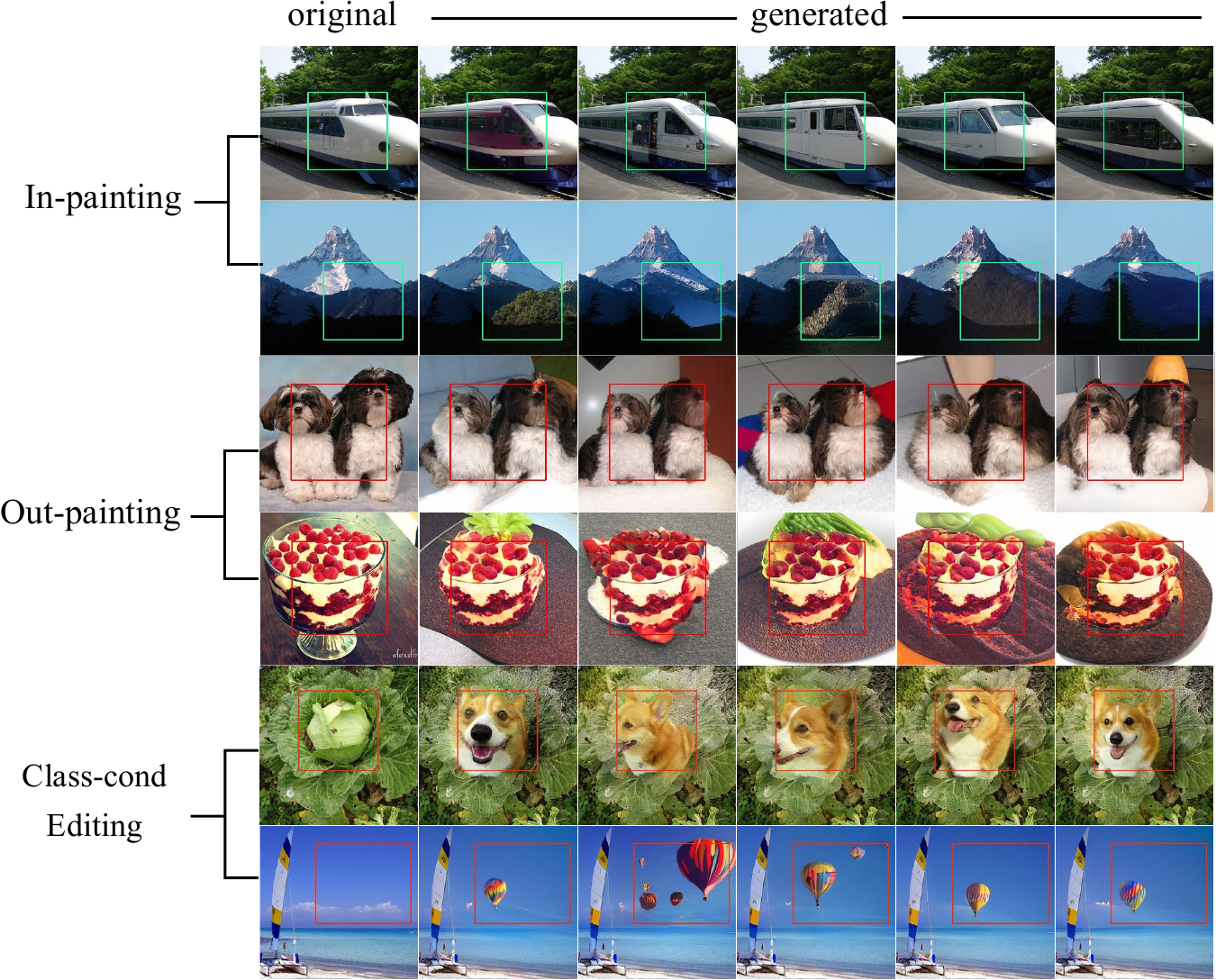}
\end{center}
\vspace{-8pt}
\caption{\small
\textbf{Zero-shot evaluation in downstream tasks} containing in-painting, out-painting, and class-conditional editing.
The results show that VAR can generalize to novel downstream tasks without special design and finetuning. Zoom in for a better view.
}
\vspace{-10pt}
\label{fig:zeroshot}
\end{figure}

\section{Zero-shot task generalization} \label{sec:zero}


\para{Image in-painting and out-painting.}
VAR-$d$30 is tested.
For in- and out-painting, we teacher-force ground truth tokens outside the mask and let the model only generate tokens within the mask.
No class label information is injected into the model.
The results are visualized in \figref{fig:zeroshot}.
Without modiﬁcations to the network architecture or tuning parameters, VAR has achieved decent results on these downstream tasks, substantiating the generalization ability of VAR.

\para{Class-conditional image editing.}
Following MaskGIT~\cite{maskgit} we also tested VAR on the class-conditional image editing task.
Similar to the case of in-painting, the model is forced to generate tokens only in the bounding box conditional on some class label.
\Figref{fig:zeroshot} shows the model can produce plausible content that fuses well into the surrounding contexts, again verifying the generality of VAR.

\begin{table}[b]
\renewcommand\arraystretch{1.06}
\centering
\setlength{\tabcolsep}{2.5mm}{}
\small
{
\caption{\smallcaption
\textbf{Ablation study of VAR.} The first two rows compare GPT-2-style transformers trained under AR or VAR algorithm without any bells and whistles.
Subsequent lines show the influence of VAR enhancements.
``AdaLN'': adaptive layernorm.
``CFG'': classifier-free guidance.
``Attn. Norm.'': normalizing $q$ and $k$ to unit vectors before attention.
``Cost'': inference cost relative to the baseline.
``$\Delta$'': FID reduction to the baseline.
}\label{tab:abla}
{\begin{tabular}{ll|ccccc|ccc}
\toprule
    $\ $  & Description       & Para. & Model & AdaLN & Top-$k$ & CFG  & Cost & FID$\downarrow$  & $\Delta$ \\
\midrule
\graycell{1}& AR~\cite{vqgan}  & 227M  & AR  & \cha   & \cha    & \cha  & 1 & 18.65 & $~~~$0.00     \\
\ablanum{2} & AR to VAR          & 207M  & VAR-$d$16  & \cha  & \cha    & \cha  & 0.013 & 5.22 & $-$13.43  \\
\midrule
\ablanum{3} & $+$AdaLN           & 310M  & VAR-$d$16  & \gou  & \cha  & \cha  & 0.016 & 4.95 & $-$13.70 \\
\ablanum{4} & $+$Top-$k$         & 310M  & VAR-$d$16  & \gou  & 600   & \cha  & 0.016 & 4.64 & $-$14.01 \\
\ablanum{5} & $+$CFG             & 310M  & VAR-$d$16  & \gou  & 600   & 2.0   & 0.022 & 3.60 & $-$15.05 \\
\ablanum{5} & $+$Attn. Norm.     & 310M  & VAR-$d$16  & \gou  & 600   & 2.0   & 0.022 & 3.30 & $-$15.35 \\
\midrule
\ablanum{6} & $+$Scale up        & 2.0B  & VAR-$d$30  & \gou  & 600   & 2.0   & 0.052 & 1.73 & $-$16.85 \\
\bottomrule
\end{tabular}}
}
\vspace{-2pt}
\end{table}

\section{Ablation Study} \label{sec:abla}

In this study, we aim to verify the effectiveness and efficiency of our proposed VAR framework.
Results are reported in \tabref{tab:abla}.

\firstpara{Effectiveness and efficiency of VAR.} Starting from the vanilla AR transformer baseline implemented by \cite{maskgit}, we replace its methodology with our VAR and keep other settings unchanged to get \ablaref{2}. VAR achieves a way more better FID (18.65 \textit{vs.} 5.22) with only 0.013$\times$ inference wall-clock cost than the AR model, which demonstrates a leap in visual AR model's performance and efficiency.

\para{Component-wise ablation.} We further test some key components in VAR. By replacing the standard Layer Normalization (LN) with Adaptive Layer Normalization (AdaLN), VAR starts yielding better FID than baseline. By using the top-$k$ sampling similar to the baseline, VAR's FID is further improved. By using the classifier-free guidance (CFG) with ratio $2.0$ and normalizing $q$ and $k$ to unit vectors before attention, we reach the FID of 3.30, which is 15.35 lower to the baseline, and its inference speed is still 45 times faster. We finally scale up VAR size to 2.0B and achieve an FID of 1.73. This is 16.85 better than the baseline FID.

\section{Limitations and Future Work} \label{sec:limit}
\vspace{-2pt}

In this work, we mainly focus on the design of learning paradigm and keep the VQVAE architecture and training unchanged from the baseline \cite{vqgan} to better justify VAR framework's effectiveness. We expect \textbf{advancing VQVAE tokenizer} \cite{movq,fsq,magvit2} as another promising way to enhance autoregressive generative models, which is orthogonal to our work. We believe iterating VAR by advanced tokenizer or sampling techniques in these latest work can further improve VAR's performance or speed.

\textbf{Text-prompt generation}\, is an ongoing direction of our research. Given that our model is fundamentally similar to modern LLMs, it can easily be integrated with them to perform text-to-image generation through either an encoder-decoder or in-context manner. This is currently in our high priority for exploration.

\textbf{Video generation}\, is not implemented in this work, but it can be naturally extended. By considering multi-scale video features as \textbf{3D pyramids}, we can formulate a similar ``\textbf{3D next-scale prediction}'' to generate videos via VAR.
Compared to diffusion-based generators like SORA \cite{sora}, our method has inherent advantages in temporal consistency or integration with LLMs, thus can potentially handle longer temporal dependencies.
This makes VAR competitive in the video generation field, because traditional AR models can be too inefficient for video generation due to their extremely high computational complexity and slow inference speed: it is becoming prohibitively expensive to generate high-resolution videos with traditional AR models, while VAR is capable to solve this.
We therefore foresee a promising future for exploiting VAR models in the realm of video generation.

\vspace{-4pt}
\section{Conclusion} \label{sec:conc}
\vspace{-4pt}

We introduced a new visual generative framework named Visual AutoRegressive modeling (VAR) that 1) theoretically addresses some issues inherent in standard image autoregressive (AR) models, and 2) makes language-model-based AR models first surpass strong diffusion models in terms of image quality, diversity, data efficiency, and inference speed.
Upon scaling VAR to 2 billion parameters, we observed a clear power-law relationship between test performance and model parameters or training compute, with Pearson coefficients nearing $-0.998$, indicating a robust framework for performance prediction.
These scaling laws and the possibility for zero-shot task generalization, as hallmarks of LLMs, have now been initially verified in our VAR transformer models.
We hope our findings and open sources can facilitate a more seamless integration of the substantial successes from the natural language processing domain into computer vision, ultimately contributing to the advancement of powerful multi-modal intelligence.


\appendix
\begin{figure}[th]
\begin{center}
	\includegraphics[width=0.96\linewidth]{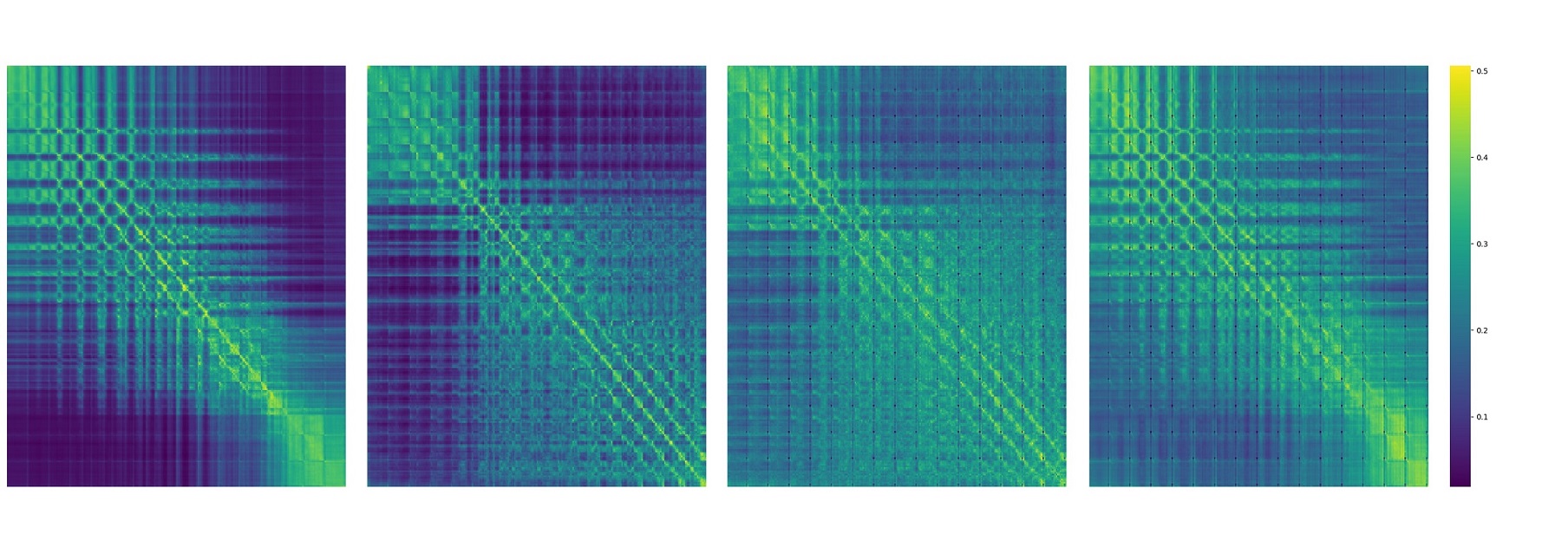}
\end{center}
\vspace{-8pt}
\caption{\small
\textbf{Token dependency plotted.} The normalized heat map of attention scores in the last self-attention layer of VQGAN encoder is visualized.
4 random 256$\times$256 images from ImageNet validation set are used.
}
\vspace{-2pt}
\label{fig:attn}
\end{figure}

\section{Token dependency in VQVAE} \label{app:dependency}

To examine the token dependency in VQVAE~\cite{vqgan}, we check the attention scores in the self-attention layer before the vector quantization module.
We randomly sample 4 256$\times$256 images from the ImageNet validation set for this analysis.
Note the self-attention layer in \cite{vqgan} only has 1 head so for each image we just plot one attention map.
The heat map in \figref{fig:attn} shows the attention scores of each token to all other tokens, which indicate a strong, bidirectional dependency among all tokens.
This is not surprising since the VQVAE model, trained to reconstruct images, leverages self-attention layers without any attention mask.
Some work~\cite{phenaki} has used causal attention in self-attention layers of a video VAE, but we did not find any image VAE work uses causal self-attention.

\section{Time complexity of AR and VAR generation} \label{app:complexity}

We prove the time complexity of AR and VAR generation.
\begin{lemma}
For a standard self-attention transformer, the time complexity of AR generation is $\mathcal{O}(n^6)$, where $h=w=n$ and $h, w$ are the height and width of the VQ code map, respectively.
\end{lemma}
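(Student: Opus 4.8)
The plan is a direct operation count, so the work is in pinning down the computational model rather than in any clever argument. I will assume, as is standard, that the transformer has a fixed depth $L$ and a fixed hidden width $d$, both independent of $n$, and that autoregressive generation is performed in the vanilla way: to sample the $t$-th token the model runs a forward pass on the current prefix $(x_1,\dots,x_{t-1})$ and reads off $p_\theta(x_t \mid x_{<t})$ as in \eqref{eq:ar}. First I would record that after tokenizing and flattening the $h\times w = n\times n$ VQ code map we obtain a sequence of $T = hw = n^2$ discrete tokens, so the generation consists of exactly $T = n^2$ autoregressive steps.

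Next I would bound the cost of one step. At step $t$ the transformer processes a sequence of length $\ell = t-1 \le T$. Per layer, the query/key/value projections and the position-wise feed-forward network cost $\mathcal{O}(\ell d^2)$, while forming the attention score matrix $QK^\top \in \mathbb{R}^{\ell\times\ell}$, applying the softmax, and multiplying by $V$ cost $\mathcal{O}(\ell^2 d)$. Treating $L$ and $d$ as constants, one forward pass on a length-$\ell$ prefix therefore costs $\Theta(\ell^2)$, with the $\Theta(\ell^2)$ self-attention term dominating the $\Theta(\ell)$ projection/feed-forward term.

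Finally I would sum over all steps: the total cost is $\sum_{t=1}^{T} \Theta(t^2) = \Theta(T^3) = \Theta((n^2)^3) = \Theta(n^6)$, which gives the claimed $\mathcal{O}(n^6)$ (and in fact a matching lower bound, since each of the last $T/2$ tokens requires a pass over a prefix of length $\ge T/2$, contributing $\Omega(T\cdot T^2)=\Omega(n^6)$). For the upper bound alone a coarser one-liner also suffices: each of the $\mathcal{O}(n^2)$ steps costs at most one forward pass on the full length-$n^2$ sequence, i.e. $\mathcal{O}((n^2)^2)=\mathcal{O}(n^4)$, so the total is $\mathcal{O}(n^2)\cdot\mathcal{O}(n^4)=\mathcal{O}(n^6)$.

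I do not expect a genuine obstacle here; the only delicate point is the choice of computational model. The count is for the standard, uncached forward-pass procedure and credits self-attention (not the MLP) as the leading cost; I would flag explicitly that if key/value caching were used each step would drop to $\mathcal{O}(t)$ work and the total would improve to $\mathcal{O}(n^4)$, which is precisely the figure obtained for VAR, so the $\mathcal{O}(n^6)$ statement must be understood as referring to the vanilla self-attention transformer without such engineering optimizations.
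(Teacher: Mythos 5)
Your proposal is correct and follows essentially the same argument as the paper: there are $n^2$ autoregressive steps, the $i$-th step costs $\mathcal{O}(i^2)$ due to the attention-score computation over the prefix, and summing $\sum_{i=1}^{n^2} i^2$ gives $\mathcal{O}(n^6)$. Your added remarks on the computational model (fixed depth/width, no kv-caching) and the matching lower bound are sensible refinements but do not change the route.
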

\begin{proof}
The total number of tokens is $h\times w=n^2$.
For the $i$-th ($1 \le i \le n^2$) autoregressive iteration, the attention scores between each token and all other tokens need to be computed, which requires $\mathcal{O}(i^2)$ time.
So the total time complexity would be:
\begin{align}
	\sum\limits_{i=1}^{n^2} i^2 = \frac{1}{6}n^2(n^2+1)(2n^2+1),
\end{align}
Which is equivalent to $\mathcal{O}(n^6)$ basic computation.
\end{proof}

\noindent For VAR, it needs us to define the resolution sequense $(h_1, w_1, h_2, w_2, \dots, h_K, w_K)$ for autoregressive generation, where $h_i, w_i$ are the height and width of the VQ code map at the $i$-th autoregressive step, and $h_K=h, w_K=w$ reaches the final resolution.
Suppose $n_k = h_k = w_k$ for all $1\le k\le K$ and $n=h=w$, for simplicity.
We set the resolutions as $n_k=a^{(k-1)}$ where $a>1$ is a constant such that $a^{(K-1)}=n$.
\begin{lemma}
For a standard self-attention transformer and given hyperparameter $a>1$, the time complexity of VAR generation is $\mathcal{O}(n^4)$, where $h=w=n$ and $h, w$ are the height and width of the last (largest) VQ code map, respectively.
\end{lemma}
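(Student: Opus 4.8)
The plan is to imitate the accounting used in the proof of the preceding lemma: track the length of the token sequence processed at each of the $K$ autoregressive steps, bound the transformer cost of that step as a function of the current sequence length, and sum. First I would record the number of steps. Since $n_k = a^{k-1}$ and $a^{K-1}=n$, there are $K = \log_a n + 1 = \mathcal{O}(\log n)$ scales, which already establishes the $\mathcal{O}(\log n)$ iteration count claimed in the main text.

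Next I would compute the prefix length at step $k$, i.e.\ the number of tokens in $r_{\le k}$:
\begin{align}
L_k = \sum_{j=1}^{k} n_j^2 = \sum_{j=1}^{k} a^{2(j-1)} = \frac{a^{2k}-1}{a^2-1}.
\end{align}
The key observation is that because $a>1$ this geometric sum is dominated by its last term, so $L_k = \Theta(a^{2k}) = \Theta(n_k^2)$; the implicit constant $\tfrac{a^2}{a^2-1}$ depends on $a$ but not on the exponent.

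Then I would bound the cost of the $k$-th step. Exactly as in the preceding lemma, producing the $k$-th token map requires, for each of its positions, attention scores against the whole current prefix, so the attention layers cost at most $\mathcal{O}(L_k^2)$ at step $k$ (with kv-caching this tightens to $\mathcal{O}(n_k^2 L_k)$, which yields the same final bound), while the position-wise feed-forward and projection operations add only $\mathcal{O}(L_k)$. Summing over $k=1,\dots,K$,
\begin{align}
\sum_{k=1}^{K} L_k^2 = \mathcal{O}\left(\sum_{k=1}^{K} a^{4k}\right) = \mathcal{O}\left(\frac{a^{4(K+1)}-a^4}{a^4-1}\right) = \mathcal{O}\left(a^{4K}\right),
\end{align}
again because the ratio $a^4>1$ makes the sum geometric in its last term. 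Since $a^{K} = a\cdot a^{K-1} = a n$, we have $a^{4K} = a^4 n^4 = \mathcal{O}(n^4)$, and the feed-forward contribution $\sum_{k} \mathcal{O}(L_k) = \mathcal{O}(L_K) = \mathcal{O}(n^2)$ is of lower order; hence the total is $\mathcal{O}(n^4)$.

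I do not expect a genuine obstacle: the argument is a two-line geometric-series estimate. The only points needing care are (i) stating explicitly that the per-step attention cost is quadratic in the current sequence length $L_k$, matching the convention of the preceding lemma, and (ii) checking that the hyperparameter $a$ enters only through multiplicative constants $\tfrac{a^2}{a^2-1}$, $\tfrac{a^4}{a^4-1}$, $a^4$, so that the exponent $4$ is independent of $a$. A closing remark that the same computation gives $K = \mathcal{O}(\log_a n)$ inference iterations would tie the bound back to the $\mathcal{O}(\log(n))$ iteration claim in \secref{sec:sota}.
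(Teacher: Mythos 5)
Your proposal matches the paper's own proof: the same per-step accounting (cost of scale $k$ quadratic in the prefix length $\frac{a^{2k}-1}{a^2-1}$) followed by summing over the $K=\log_a n+1$ scales; the only cosmetic difference is that you bound the resulting geometric sum by its dominant last term $\Theta(a^{4K})=\Theta(a^4n^4)$, whereas the paper evaluates the sum in closed form before concluding $\mathcal{O}(n^4)$. This is essentially the same argument and it is correct.
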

\begin{proof}
Consider the $k$-th ($1 \le k \le K$) autoregressive generation.
The total number of tokens of current all token maps $(r_1, r_2, \dots, r_k)$ is:
\begin{align}
\sum\limits_{i=1}^{k} n_i^2 = \sum\limits_{i=1}^{k} a^{2\cdot(k-1)} = \frac{a^{2k}-1}{a^2-1}.
\end{align}
So the time complexity of the $k$-th autoregressive generation would be:
\begin{align}
	\left(\frac{a^{2k}-1}{a^2-1}\right) ^ 2.
\end{align}
By summing up all autoregressive generations, we have:
\begin{align}
	& \sum\limits_{k=1}^{\log_a(n)+1} \left(\frac{a^{2k}-1}{a^2-1}\right) ^ 2 \\
	&= \frac{(a^4-1)\log n + \left( a^8n^4 - 2a^6n^2 - 2a^4(n^2-1) + 2 a^2 - 1 \right)\log a}{(a^2-1)^3(a^2+1)\log a} \\
	&\sim \mathcal{O}(n^4).
\end{align}
This completes the proof.
\end{proof}

\begin{figure}[th]
\begin{center}
  \includegraphics[width=1\linewidth]{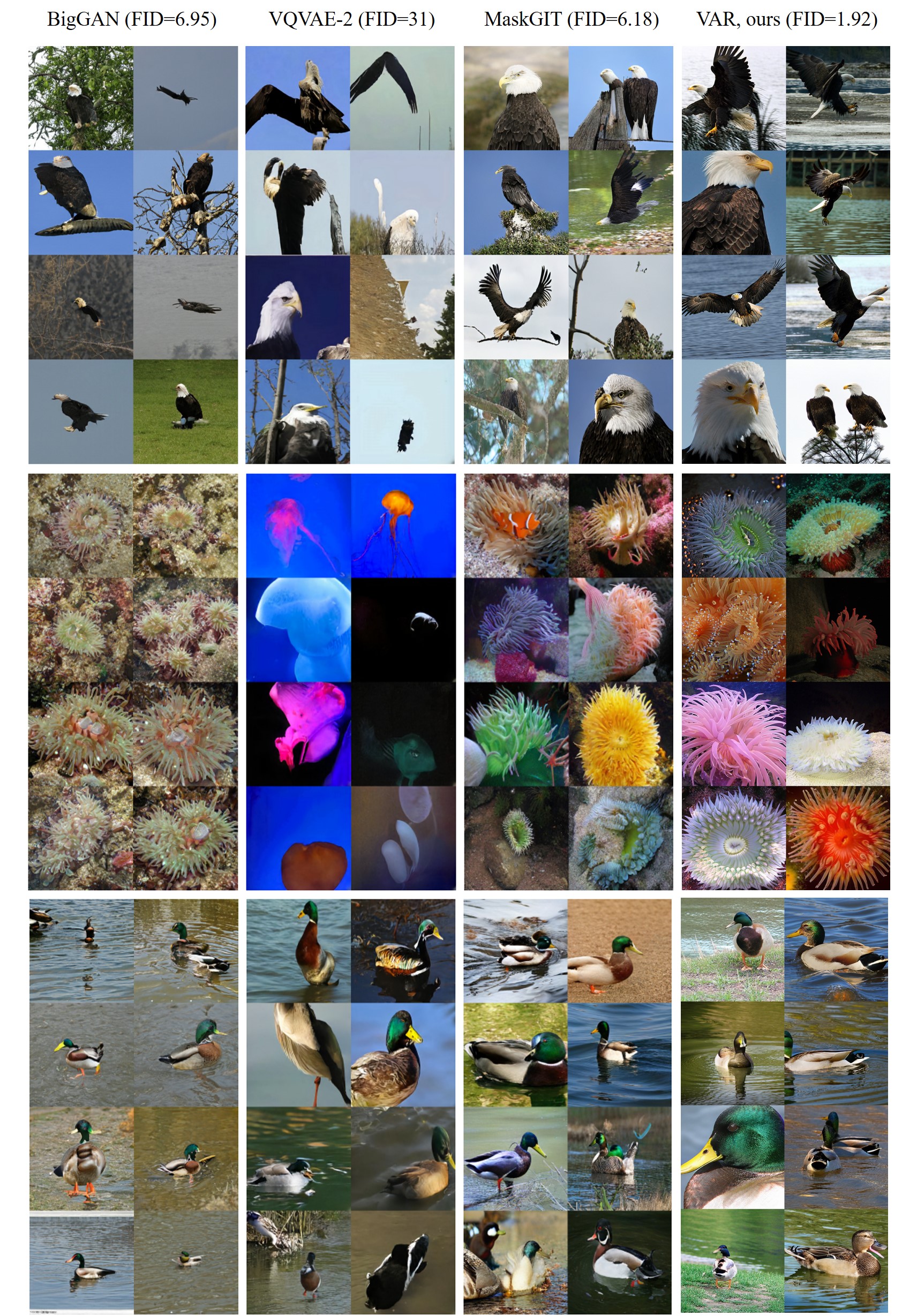}
\end{center}
\vspace{-8pt}
\caption{\small
\textbf{Model comparison on ImageNet 256$\times$256 benchmark.}
More generated 512$\times$512 samples by VAR can be found in the submitted Supplementary Material zip file.
}
\vspace{-2pt}
\end{figure}

\begin{figure}[th]
	\begin{center}
	\includegraphics[width=1\linewidth]{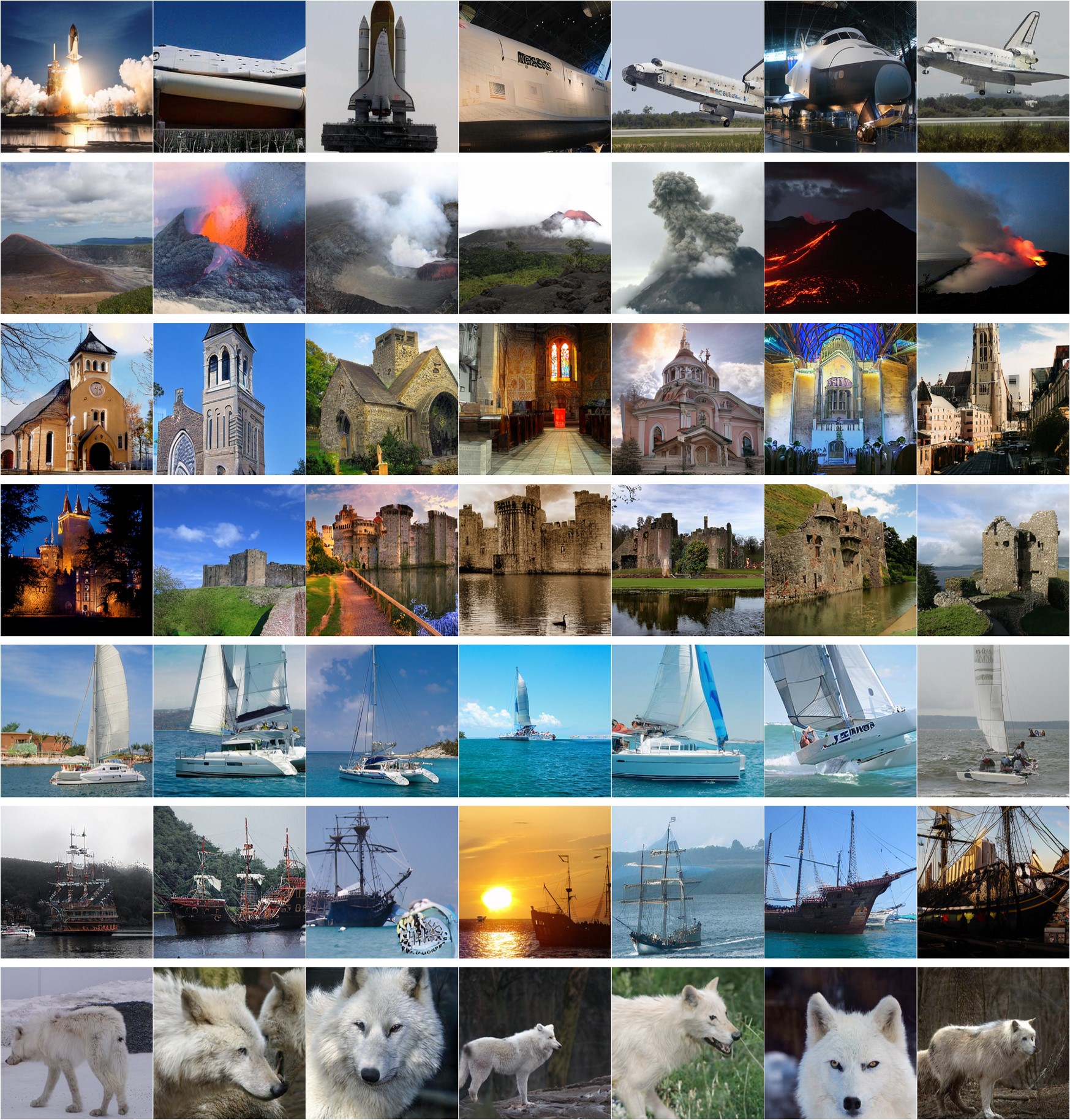}
\end{center}
\vspace{-8pt}
\caption{\small
\textbf{Some generated 256$\times$256 samples by VAR trained on ImageNet.}
More generated 512$\times$512 samples by VAR can be found in the submitted Supplementary Material zip file.
}
\vspace{-2pt}
\end{figure}

\clearpage
\newpage

\clearpage
{
\small
\bibliographystyle{cite}
\bibliography{cite}
}

\end{document}